\newtheorem{definition}{Definition}
\newtheorem{lemma}{Lemma}
\newtheorem{problem}{Problem}
\newtheorem{theorem}{Theorem}
\newtheorem{remark}{Remark}
\newtheorem{assumption}{Assumption}
\def\b{\boldsymbol}
\def\be{\boldsymbol e}
\def\bu{\boldsymbol u}
\def\bp{\boldsymbol p}
\def\bq{\boldsymbol q}
\def\bp{\boldsymbol p}
\def\R{\mathbf R}
\def\Z{\mathbf Z}
\def\Log{\mathsf{Log}}
\newcommand{\norm}[1]{\left\|#1\right\|}
\title{\LARGE \bf
Autonomous 3D Moving Target Encirclement and Interception with Range Measurement
}
\author{Fen Liu,~Shenghai Yuan,~\textit{Member,~IEEE},~Thien-Minh Nguyen,~\textit{Member,~IEEE},~Rong Su,~\textit{Senior Member,~IEEE}
 \thanks{The work is supported by National Research Foundation of Singapore under its Medium-Sized Center for Advanced Robotics Technology Innovation and by Naval Group Far East Pte Ltd via an RCA with NTU.}
\thanks{F. Liu, S. Yuan, T. Nguyen and  R. Su are with the School of Electrical and Electronic Engineering, Nanyang Technological University, Singapore 639798, Singapore (e-mail: {fen.liu, shyuan, thienminh.nguyen, RSu}@ntu.edu.sg).}
}
\begin{document}


 \maketitle

\begin{strip}
\begin{minipage}{\textwidth}\centering
\vspace{-40pt}
\centering
\includegraphics[width=1\textwidth]{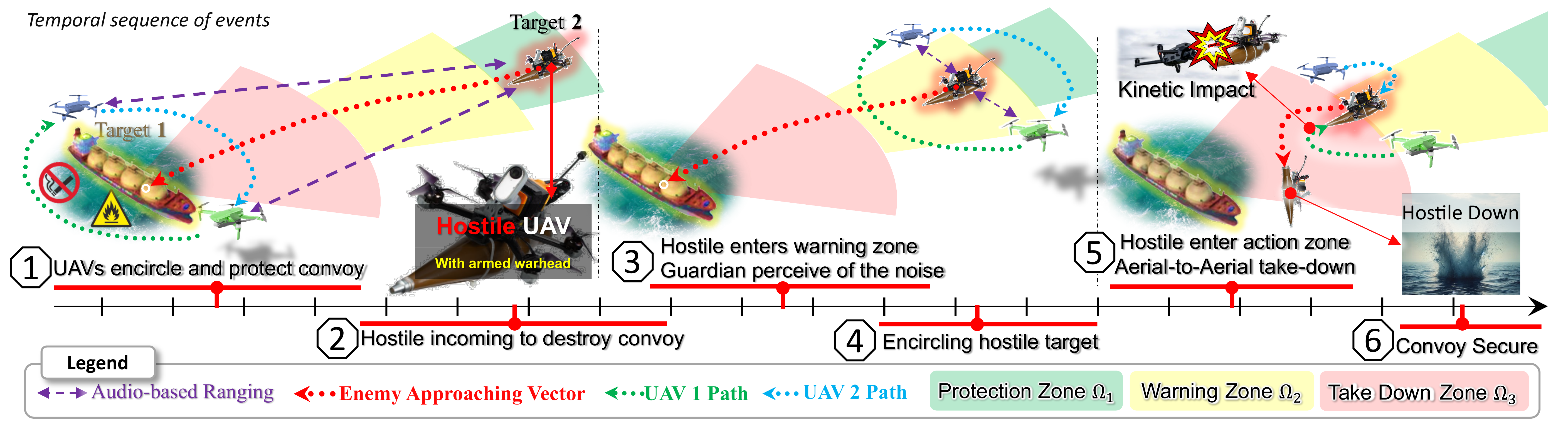}
\vspace{-15pt}
\captionof{figure}{Illustrations of the drone swarm overwatch concepts for convoy escort of Target 1.}
\label{overarching}
\vspace{-10pt}
\end{minipage}

\end{strip}

\thispagestyle{plain}
\pagestyle{plain}

\begin{abstract}
Commercial UAVs are an emerging security threat as they are capable of carrying hazardous payloads or disrupting air traffic. To counter UAVs, we introduce an autonomous 3D target encirclement and interception strategy. Unlike traditional ground-guided systems, this strategy employs autonomous drones to track and engage non-cooperative hostile UAVs, which is effective in non-line-of-sight conditions, GPS denial, and radar jamming, where conventional detection and neutralization from ground guidance fail. Using two noisy real-time distances measured by drones, guardian drones estimate the relative position from their own to the target using observation and velocity compensation methods, based on anti-synchronization (AS) and an X$-$Y circular motion combined with vertical jitter.
An encirclement control mechanism is proposed to enable UAVs to adaptively transition from encircling and protecting a target to encircling and monitoring a hostile target. Upon breaching a warning threshold, the UAVs may even employ a suicide attack to neutralize the hostile target.
We validate this strategy through real-world UAV experiments and simulated analysis in MATLAB, demonstrating its effectiveness in detecting, encircling, and intercepting hostile drones. More details: \url{https://youtu.be/5eHW56lPVto}.
\end{abstract}

\section{Introduction}
The rise of commercial drones with long endurance \cite{cao2023neptune} and advanced navigation capabilities \cite{cao2022direct,wu2021learn} has introduced security challenges, calling for effective countermeasures \cite{jacobsen2021security}. Traditional defensive systems rely on ground-based visual \cite{xia2024av-dtec} or radar tracking \cite{lei2025audio}, which are optimized for large objects. However, their limitations in detecting small UAVs make them less effective against low-altitude compact drones.

Newer methods \cite{yasmine2022survey}, such as radio frequency jamming, GPS spoofing, and net-catching \cite{vrba2024onboard}, attempt to address these threats but suffer from issues such as high power consumption, immobility, etc. To overcome these issues, the latest anti-drone concept \cite{lefebvre2016conceptual} proposes drone to drone combat system, offering an interesting way to neutralize threats beyond the horizon.
While manual drones have been used in conflicts like Ukraine for counter-drone operations, transitioning to autonomous aerial drones poses challenges. Target localization is constrained by payload limits, onboard sensors \cite{yuan2021survey}, and computational restrictions, making lightweight sensors the most practical solution.

Existing target localization algorithms often rely on cameras for bearing or relative angle measurement \cite{li2022three, chen2022triangular}, but this requires a complex visual processing pipeline. In contrast, distance measurement is generally simpler and more practical. However, ranging systems like UWB \cite{fang2020graph, wang2017ultra} or WiFi \cite{yang2024mm} depend on cooperative targets, limiting their versatility in real-world scenarios.
To address non-cooperative hostile targets, recent ranging solutions \cite{yang2023av, yuan2024MMAUD} employ microphones, enabling robust localization independent of lighting or radio frequency conditions. Despite these advantages, range-only localization algorithms face inherent limitations, often assuming the target is stationary, moving at a constant speed, or moving slowly \cite{shames2011circumnavigation, jiang2016simultaneous, dong2020target}. When targets exhibit unpredictable or high-speed movement, additional estimators or compensators become necessary for accurate motion tracking.

While localization is crucial, neutralization remains an even greater challenge in aerial anti-drone operations. Research in joint aerial anti-drone estimation and neutralization is still emerging, with limited studies available. Many existing neutralization techniques \cite{souli2023multi,9636065,9340835} rely on detecting radio control frequencies, but these approaches lack precision and are easily circumvented by advanced technologies such as 5G-datalink \cite{fan2024air}, optical communications \cite{singh2022comprehensive}, or tethered control systems \cite{cao2023neptune}.
Alternative approaches, such as deep reinforcement learning-based interception \cite{hu2022pursuit, li2023predator, wu2019tdpp}, require extensive training data and high onboard computing power, making them impractical and costly for real-world deployment. Additionally, most existing control techniques \cite{li2022three, nguyen2019distance, nguyen2019persistently, vrba2024onboard, simonsen2020application, walter2019uvdar} focus solely on target detection and following, lacking the capability for direct engagement. As a result, these methods are ineffective against agile or evasive drones, underscoring the need for more adaptive and robust anti-drone solutions.

This paper presents a novel approach to autonomous drone-based overwatch, interception, and neutralization of unauthorized or malicious drones, eliminating the need for complex surface vehicle guided systems. To the best of our knowledge, this work is the first to demonstrate autonomous drone-to-drone interception using minimal onboard sensors with range-only measurements.
The key contributions of this paper are summarized as follows:
\begin{enumerate}

  \item 
We propose a method for estimating the relative position of a non-cooperative target using noisy distance measurements from onboard sensors of two drones, without requiring prior knowledge of the target’s motion. Unlike previous approaches \cite{nguyen2018integrated, nguyen2019persistently}, this estimator can achieve persistently exciting (PE)-based estimation by leveraging the distribution of UAVs and self-control-compensated observations.
  
  \item 
  We propose a novel anti-drone controller (ADC) that utilizes the anti-synchronization (AS) encirclement strategy \cite{liu2023multiple,liu2023non} and an X$-$Y circular motion combined with vertical jitter. Unlike direct target guidance methods \cite{shah2014guidance,li2022three}, our controller compensates for hostile target observations while encircling the protected target, as well as adaptively encircling the hostile target. 
  
\end{enumerate}

\section{Problem formulation} \label{Problem}
Our mission focuses on providing overwatch to the protected target from the threats posed by the non-cooperative hostile target. To this end, we will deploy two guardian drones, designated as Drone 1 and Drone 2. A protected target can be any vehicle or boat, but a hostile target is strictly considered a drone with hostile intentions. 

We denote the position of drone $i$ as $\bp_i$. Hence, the kinematic model of the UAVs is defined as follows \cite{forster2015imu}:
\begin{align} \label{eq: dynamic model}
    R_i(k+1) &= \mathrm{Exp}\big( \bar{\bu}_i(k)t \big) R_i(k),
    \\
    \bp_i(k+1) &= \bp_i(k) + t R_i(k){\bu}_{i}(k),
\end{align}
where $t$ is the sampling time, and $R_i = {}^w_bR_i \in SO(3)$ denotes the rotation matrix that transforms the control input in the drone body frame $b$ to the world frame $w$. The vectors $\bu_i(k) \in \mathbb{R}^3$ and $\bar{\bu}_i = [\bar{\bu}_{i,\phi}, \bar{\bu}_{i,\theta}, \bar{\bu}_{i,\psi}]^\top \in \mathbb{R}^3$ denote the linear velocity and angular velocity of drone $i$ in the body frame, respectively. $\phi,\theta, \psi$ are attitude angles.

Based on \eqref{eq: dynamic model}, the relative position between drone 1 and drone 2, denoted as $\bq_{12}$, is obtained by taking the difference between $\bq_1$ and $\bq_2$ at any time instance $k$, which yields:
\begin{align}\label{eq:1-1}
{\bq}_{12}(k+1) &= {\bq}_{12}(k) \nonumber\\
                &+ t \Big[ R_1(k) \cdot \bu_{1}(k)
                 - R_2(k) \cdot \bu_{2}(k) \Big].
\end{align}


For notational convenience, the time index $(k)$ will be partially omitted in the following discussion. Specifically, $(k+1)$ will be represented as $^{(+)}$ and $(k-1)$ as $^{(-)}$.

We define the relative motion model for the target $j,j\in\{1,2\}$ and drone $i$ in the world frame as
\begin{equation}\label{eq1-5}
{\bq_i^j}^{(+)} = {\bq}_i^j + t(R_i\bu_{i}-\b{v}_j),\ i \in \{1,2\},
\end{equation}
where ${\bq}_i^j \in \R^3$ is the relative position from drone $i$ to the target $j$ at time $k$, and $\b{v}\in \R^3$ is the unknown velocity of target. Here, $\b{v}$ can follow an arbitrary distribution and includes some environmental noise and own nonlinear maneuvers, but is assumed to satisfy $\norm{\b{v}_j }\leq \check{v}_j$ with known $\check{v}_s \in \R^{+}$. 

We define the relative distances between the drones or targets are denoted as follows,
\begin{align}
    d_{12} \triangleq \norm{\b{q}_{12}},\ d_{i}^j \triangleq \norm{\b{q}_{i}^j}, d_{12} \triangleq \norm{\b{q}_{i}^2-\b{q}_{i}^1},
\end{align}
where the subscript is reserved for the index of the guardian and the superscript for the target.

In this work, we focus on dealing with a three-dimensional non-cooperative Target 2. Therefore, we assume that Target 1 will share its position with both drones.
For simplicity, when the protected target is a surface robot, we define Target 1 as a virtual point at a height $h_i$ above the real target, i.e. $\check{z}_1= h$ for Drones.

Denote $\hat{s}_2$ and $\hat{d}^{12}$ as the estimate of $s_2$ and ${d}^{12}$, which will be furnished via some estimation laws. We define three decision zones based on $\hat{d}^{12}$, namely $\Omega_1=\{\hat{d}^{12}|\hat{d}^{12}\geq z_3\}$, $\Omega_2=\{\hat{d}^{12}|z_3> \hat{d}^{12}\geq z_2\}$ and $\Omega_3=\{\hat{d}^{12}|z_2>\hat{d}^{12}\geq z_1\}$, where $z_1$, $z_2$ and $z_3$ are the radii of the \textit{target protection zone}, \textit{take down (or capture) zone} and \textit{warning zone}, respectively, as depicted in Fig. \ref{overarching}.

Moreover, we define an \textit{encirclement shape} vector $\bm{\zeta}(r,\nu,k) \in \mathrm{R}^3$ as
\begin{equation}
\begin{split}
\bm{\zeta}(r,\nu,k)
=
r(k)
\begin{bmatrix}
    \sin(\nu k\pi), &\cos(\nu k\pi), &\bm{g}(k)
\end{bmatrix}^\top, 
\end{split}
\end{equation}
where $r(k)$ is the encirclement radius, and $\bm{g}(k) \in \R$ is the vertical motion function. $\{\bm{\zeta}(k)\}$ satisfies the following PE assumption. 

\begin{assumption}\label{Persistently_exciting_shape}
The sequence $\{\bm{\zeta}(k)\}$ is PE, i.e. there exist $\hat{a}_\zeta, \check{a}_\zeta \in \R^{+} $ and $N \in \Z^+$ such that: \vspace{-2mm}
\begin{equation*}
\hat{a}_\zeta I
\leq\sum_{m=k}^{k+N-1}\bm{\zeta}(m) \bm{\zeta}(m)^\top\\
\leq \check{a}_\zeta I , \forall k \geq 0.
\end{equation*}
\end{assumption}

\begin{definition}
The guardian drones are said to encircle Target $j$ in an anti-synchronization (AS) manner when the preset encirclement shape $\bm{\zeta}(r,\nu,k)$ and the relative positions $\b{q}_{i}^{j}$, $i, j \in \Phi_2$ satisfy $\bm{\zeta}(r,\nu,k)^\top\b{q}_{1}^{j}=-\norm{\bm{\zeta}(r,\nu,k)}\norm{\b{q}^j_1}$ and $\bm{\zeta}(r,\nu,k)^\top\b{q}_{2}^{j}=\norm{\bm{\zeta}(r,\nu,k)}\norm{\b{q}^j_2}$.
\end{definition}

In other words, the relative positions of the guardian drones from the target are opposite to each other.


We can now make a statement for our problem of interest as follows:

\begin{problem}
Design a range-based estimation law for Target 2's position. Simultaneously, design the control laws $\bar{\b{u}}_i$ to ensure that the coordinate frames of all UAVs align with the world coordinate system, and design the control laws $\b{u}_i$ for the guardian drones to encircle Target 1 when $\hat{d}^{12} \in \Omega_1$, and \textit{encircle} or \textit{take down} Target 2 when $\hat{d}^{12}$ enters $\Omega_2$ and $\Omega_3$, respectively. The estimation and control objectives under the proposed laws can be expressed mathematically as:
\begin{subequations}
\begin{align}
R_1&=R_2=I,\label{eq1-11-0}\\
\underset{k\rightarrow \infty}{\mathrm{lim}}&\norm{{\bq}_i^2-\hat{\bq}_i^2}^2\leq\varrho_1,\label{eq1-11-1}\\
\underset{k\rightarrow \infty}{\mathrm{lim}}&\norm{\b{q}_{1}^{j}+\bm{\zeta}(r,\nu,k)}^2\leq\varepsilon_{2,j},\label{eq1-11-3}\\
\underset{k\rightarrow \infty}{\mathrm{lim}}&\norm{\b{q}_{2}^{j}-\bm{\zeta}(r,\nu,k)}^2\leq\varepsilon_{2,j},\label{eq1-11-4}
\end{align}
\end{subequations}
where $\varrho_1$ is the estimation error bound for Target 2. $\varepsilon_{1,j}$ and $\varepsilon_{2,j}$ represent the AS-based encirclement tracking error bound and the encirclement error bound for Target $j$, respectively. The difference between encirclement and capture is based on the radius $r$. When $r \geq \bar{r}$, we say the objective is to encircle and capture otherwise. Note that for surface targets, we only ensure that the two drones achieve encirclement in the XY plane.

\end{problem}

\section{Estimator and controller design}
\subsection{Measurement}
In this work, the distance $d_{i}^2$ can be measured by the on-board sensors of Drone $i$, e.g., by equirectangular perception module, wireless signal strength indicators, or target sound noise strength captured by a microphone \cite{yang2023av}. Some of these ideas have been validated as proof of concept \cite{cabrera2020detection}. 
Target 1 shares its position information with the two drones. Moreover, communication exists between the two UAVs, allowing them to share visual data as well as attitude angles obtained from the fusion of IMU and compass measurements. This ensures the alignment of coordinate frames between UAVs and enables the acquisition of the relative position $\bq_{12}$ (e.g. visual SLAM or UWB array).

 \begin{figure}
\centering
  \includegraphics[width=1\linewidth]{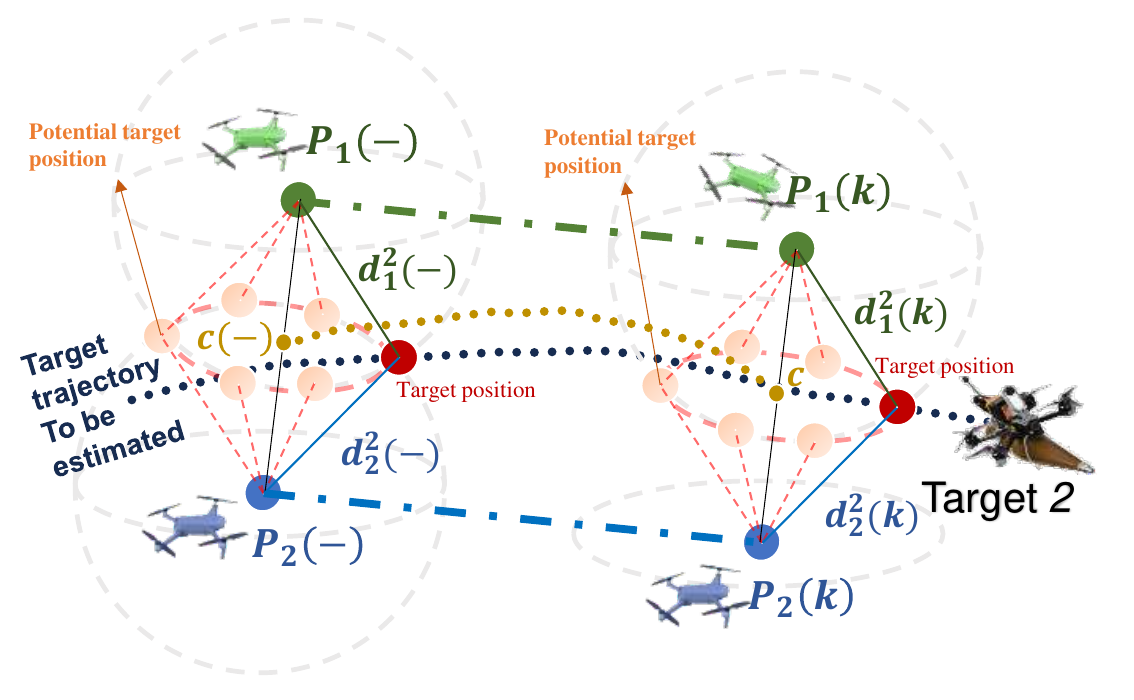}
 \caption{Range-based target motion analysis.}
  \label{target_motion}
  \vspace{-10pt}
\end{figure}

Given that $d_{i}^2$ and $\bp_i$ can be measured, the target position may exist at any point on the sphere with center $\bp_i$ and radius $d_{i}^2$, i.e. $(x-\bar{x}_i)^2+(y-\bar{y}_i)^2+(z-\bar{z}_i)^2=(d_{i}^2)^2$. Considering the presence of two drones, the position of the target can be further constrained to any point on the circle where the spheres intersect, as illustrated in Fig. \ref{target_motion}. Define the radius and the center of the intersection circle as $\varsigma$ and $\b{c}$, respectively. Considering $\norm{\bp_1-\b{c}}^2+\varsigma^2=(d_{1}^2)^2$ and $\norm{\bp_2-\b{c}}^2+\varsigma^2=(d_{2}^2)^2$, the radius $\varsigma$ and the center $\b{c}$ of the intersection circle can be calculated as: 
\begin{equation}
\varsigma=\frac{\sqrt{4d_{12}^2(d_{1}^2)^2-((d_{1}^2)^2-(d_{2}^2)^2+d_{12}^2)^2}}{2d_{12}},
\end{equation}
\begin{equation}\label{eq1-4}
\b{c}=\bp_1+\frac{\sqrt{(d_{1}^2)^2-\varsigma^2}}{d_{12}}\bp_{21}.
\end{equation}

Furthermore, based on the measurement distance $d_{i}^2$ and the position $\bp_i$, one output variable $\varpi_{i, p,2}$ related to the position of Target $2$ can be obtained for $(d_{i}^2)^2=\bp_i^\top\bp_i-2\bp_i^\top{\bq}_i^2+({\bq}_i^2)^\top{\bq}_i^2$, i.e.
\begin{equation}\label{eq1-3}
\begin{split}
\varpi_{i, p,2}\triangleq&\b{q}_{12}^\top{\bq}_i^2\\
=&\frac{1}{2}((d_{1}^2)^2-(d_{2}^2)^2-\bp_{12}^\top\bp_{12}).
\end{split}
\end{equation}

The above observation model will be used to design the Target Postion Estimator (TPE) in the subsequent part.



\subsection{Position estimation} \label{sec: tpe}
In this work, the center displacement of the intersection circle, i.e. $\b{c}-\b{c}^{(-)}$, will be used to compensate for the displacement of Target 2. 
From Fig. \ref{target_motion}, we observe that the maximum error of the displacement compensator is $\varsigma^{(-)}+\varsigma$, where $\varsigma$ is less than $d_i^{2}$. Therefore, we can understand that the displacement compensator error decreases as $d_i^{2}$ decreases. When $\hat{d}^{12} \in \Omega_1$, our objective is mainly to encircle Target 1, where $d_i^{2}$ is relatively large. However, once $\hat{d}^{12}$ enters $\Omega_2$ and $\Omega_3$, i.e. the two drones start encircling Target 2, both $d_i^{2}$ and $\varsigma$ decrease. In this case, the displacement of Target 2 will roughly be equal to $\b{c}-\b{c}^{(-)}$.

Let the time set when $\hat{d}^{12}\in\Omega_2$ and $\hat{d}^{12}\in\Omega_3$, be denoted as $\Phi_{(\Omega_2,\Omega_3)}$. Furthermore, denote $k_m-\tilde{k} \in \Phi_{(\Omega_2,\Omega_3)}$, where $\tilde{k}$ represents the time it takes for the two drones to transition from encircling Target 1 to encircling Target 2.
 

 Therefore, based on the center $\b{c}$ and the variable $\varpi_{i, p,2}$ in \eqref{eq1-4} and \eqref{eq1-3}, the update law for the TPE of Target 2 can be designed as
\begin{subequations}
\begin{align}
&\hat{\bq}_i^2
=
\hat{\bq}_i^2(k|k-1)
+
K(\varpi_{i, p,2}
-\hat{\varpi}_{p,2}),\label{eq1-5-1}\\
&\hat{\bq}_i^2(k|k-1)
= \hat{\bq}_i^{2(-)}
+t(R_i^{(-)}\bu_{i}^{(-)}
+ \hat{\b{v}}_2^{(-)}),\label{eq1-5-2}\\
&\hat{\b{v}}_2^{(-)}=\frac{\check{v}_2(\b{c}-\b{c}^{(-)})}{\max\{\check{v}_2,\norm{\b{c}-\b{c}^{(-)}}\}}\delta\{k=k_m\},\label{eq1-5-3}\\
&\hat{\varpi}_{p,2}=\b{q}_{12}^\top\hat{\bq}_i^2(k|k-1),\label{eq1-5-4}
\end{align}
\end{subequations}
where $\hat{\bq}_i^2=[\hat{x}_2,\ \hat{y}_2,\ \hat{z}_2]^\top \in \R^3$ and $\hat{\bq}_i^2(k|k-1)\in \R^3$ are the estimated and predicted position of Target $j$ at instant $k$, respectively. $\hat{\varpi}_{p,2}$ represents the estimated position output of Target $j$.  $K\in \R^3$ is an adaptive estimator gain based on the recursive least-squares method. Specifically. 
\begin{equation}\label{eq1-6}
\begin{split}
K&=\bm{\eta}_1^{(-)}\b{q}_{12}(\gamma_1+\b{q}_{12}^\top\bm{\eta}_1^{(-)}\b{q}_{12})^{-1},
\end{split}
\end{equation}
where the covariance matrix $\bm{\eta}_1\in \R^{3\times 3}$ $(\bm{\eta}_1(0)>0)$ is defined as
\begin{equation}\label{eq1-7}
\begin{split}
\bm{\eta}_1^{-1}=&\gamma_1(\bm{\eta}_1^{(-)})^{-1}+\b{q}_{12}\b{q}_{12}^\top
\end{split}
\end{equation}
with a forgetting factor $\gamma_1$.

In addition, the term $\hat{\b{v}}_2$ in \eqref{eq1-5-3} is a displacement compensator. $\delta$ is Dirac delta function, i.e, $\delta\{k=k_m\}=1$ for $k=k_m$, otherwise $\delta\{k=k_m\}=0$.




 \subsection{Anti-drone control} \label{sec: asads}
First, considering real-time UAV attitude variations caused by dynamic environmental noise, the actual rotation matrix $\hat{R}_i(k)$ can be obtained based on real-time measurement angles $\phi,\theta, \psi$ of drone $i$. An attitude controller can then be designed using the following law to ensure that $R_i=I$. 
\begin{equation}
\begin{split}
\bar{\bu}_i=-\frac{1}{t}\Log(\hat{R}_i).
\end{split}
\end{equation}

Secondly, from the TPE, we can obtain an estimate of the iner-target distance $\hat{d}^{12}=\|\hat{\bq}_i^2-\b{q}_{i}^1\|$. To achieve the whole encirclement object, the AS-based ADCs can be designed in the following three cases.

\textbf{Case 1:} When $\hat{d}^{12}\in\Omega_1$, 
\begin{subequations}
\begin{align}
\bu_{1}=&\frac{R_1^{-1}}{t}\{(\alpha-1)\b{q}_{1}^{1}+\alpha\bm{\zeta}(r_1,\nu,k)\nonumber\\
&-\bm{\zeta}(r_1,\nu,+)+\b{v}_1^{(-)}\},\label{eq1-12-1}\\
\bu_{2}=&\frac{R_2^{-1}}{t}\{(\alpha-1)\b{q}_{2}^{1}-\alpha\bm{\zeta}(r_1,\nu,k)\nonumber\\
&+\bm{\zeta}(r_1,\nu,+)+\b{v}_1^{(-)}\},\label{eq1-12-2}
\end{align}
\end{subequations}
where $\alpha$ is the controller gain. $r_1$ is the given constant. 

\textbf{Case 2:} When $\hat{d}^{12}\in\Omega_2$, 
\begin{subequations}
\begin{align}
\bu_{1}=&\frac{R_1^{-1}}{t}\{(\alpha-1)\hat{\b{q}}_{1}^2+\alpha\bm{\zeta}(r_2,\nu,k)\nonumber\\
&-\bm{\zeta}(r_2,\nu,+)+\hat{\b{v}}_2^{(-)}\},\label{eq1-12-3}\\
\bu_{2}=&\frac{R_2^{-1}}{t}\{(\alpha-1)\hat{\b{q}}_{2}^2-\alpha\bm{\zeta}(r_2,\nu,k)\nonumber\\
&+\bm{\zeta}(r_2,\nu,+)+\hat{\b{v}}_2^{(-)}\},\label{eq1-12-4}
\end{align}
\end{subequations}
where $r_2$ is the given constant. 

\textbf{Case 3:} When $\hat{d}^{12}\in\Omega_3$, 
\begin{subequations}
\begin{align}
\bu_{1}=&\frac{R_1^{-1}}{t}\{(\alpha-1)\hat{\b{q}}_{1}^2+\alpha\bm{\zeta}(r_3,\nu,k)\nonumber\\
&-\bm{\zeta}(r_3^{(+)},\nu,+)+\hat{\b{v}}_2^{(-)}\},\label{eq1-12-5}\\
\bu_{2}=&\frac{R_2^{-1}}{t}\{(\alpha-1)\hat{\b{q}}_{2}^2-\alpha\bm{\zeta}(r_3,\nu,k)\nonumber\\
&+\bm{\zeta}(r_3^{(+)},\nu,+)+\hat{\b{v}}_2^{(-)}\},\label{eq1-12-6}
\end{align}
\end{subequations}
where $r_3=r_3^{(-)}-\frac{(r_2-\bar{r})(\hat{\b{v}}_2(\Omega_3(0))+\iota_2)}{z_2-z_1}$ with the estimated velocity $\hat{\b{v}}_2(\Omega_3(0))$ and time $\Omega_3(0)$ of Target 2 as it enters zone $\Omega_3$. $\iota_2$ is a given constant, and $r_3(\Omega_3(0))=r_2$.

\begin{remark} 
From a practical perception perspective, compared to other formation methods, having two guardian drones observe the target from opposite sides ensures maximum coverage and facilitates the acquisition of more information, see reference \cite{meng2016optimal}. From Fig. \ref{symmetrical_encircle}, it is easy to see that when two guardian drones encircle the target as the center, the position of the target is geometrically unique based solely on the two distance measurements. Therefore, the use of AS control can enhance positioning accuracy, which has been shown in reference \cite{meng2016optimal}.
\end{remark}

 \begin{figure}
\centering
  \includegraphics[width=7cm]{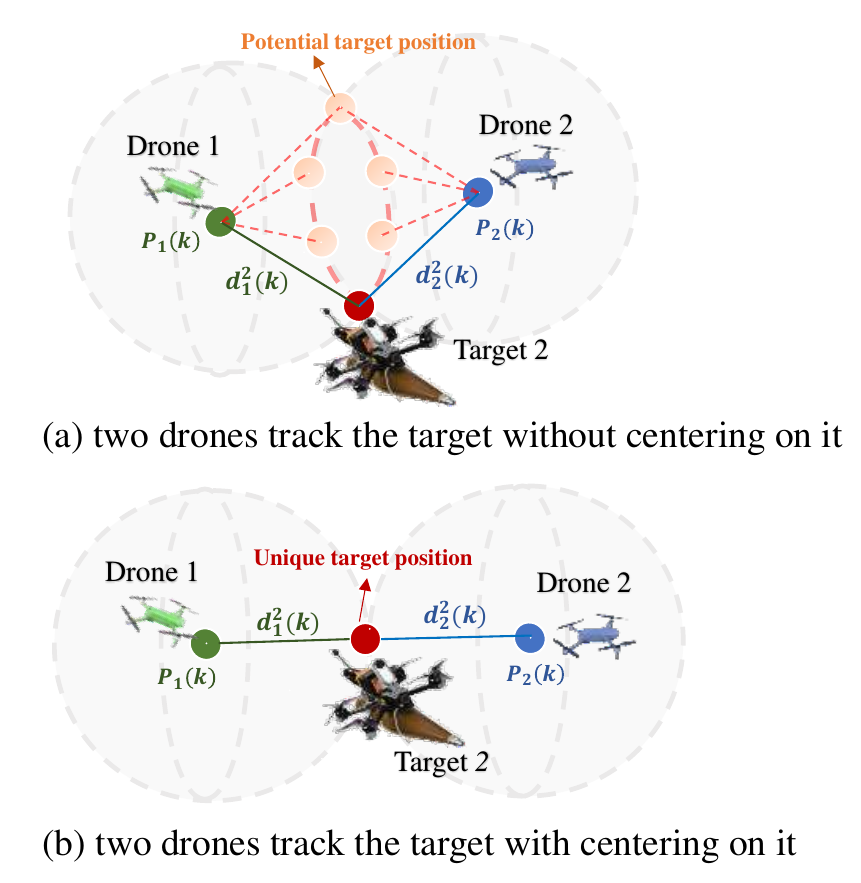}
 \caption{Two drones are symmetrical to the target.}
  \label{symmetrical_encircle}
  \vspace{-10pt}
\end{figure}

\subsection{Convergence analysis}
Considering the movement of the two drones under the AS-based ADC in the previous section, $\b{q}_{12}=\alpha\b{q}_{12}^{(-)}+2\alpha\bm{\zeta}(r,\nu,k)-2\bm{\zeta}(r^{(+)},\nu^{(+)},+)$ can be derived. Since $\alpha$ is given value and $\bm{\zeta}(r,\nu,k)$ satisfies PE, the following Lemma \ref{Persistently_exciting} can be obtained based on the Theorem IV.1 in \cite{nguyen2019persistently} and \cite{liu2024distance}.
 
\begin{lemma}(Persistently exciting)\label{Persistently_exciting}
The sequence $\{\b{q}_{12}(\kappa)\}, \kappa\in[k,k+N-1], \forall \kappa\in \Z$ is persistently exciting if the following inequality holds,
\begin{equation*}
\begin{split}
0<\hat{a}_2I_{n\times n}\leq\sum_{\kappa=k}^{k+N-1}\b{q}_{12}(\kappa) \b{q}_{12}^\top(\kappa)\leq\check{a}_2I_{n\times n}<\infty,
\end{split}
\end{equation*}
where $N$ is the motion period of encirclement. $\hat{a}_2$ and $\check{a}_2$ are the position contants.
\end{lemma}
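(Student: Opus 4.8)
The plan is to show that, under the AS-based ADC, the inter-drone vector $\bq_{12}(k)$ equals $-2\bm{\zeta}(r,\nu,k)$ up to a geometrically vanishing transient, so that the PE property of $\{\bq_{12}(k)\}$ is inherited directly from Assumption~\ref{Persistently_exciting_shape}. I would first substitute the control inputs \eqref{eq1-12-1}--\eqref{eq1-12-2} into the relative dynamics \eqref{eq:1-1}; after $R_i=I$ and cancellation of the common compensator $\b v_1^{(-)}$ this reproduces the one-step recursion stated just above the lemma, $\bq_{12}^{(+)}=\alpha\bq_{12}+2\alpha\bm{\zeta}(r,\nu,k)-2\bm{\zeta}(r,\nu,+)$, with $r,\nu$ constant in Cases~1--2. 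Defining the AS-encirclement error $\be\triangleq\bq_{12}+2\bm{\zeta}(r,\nu,k)$ and substituting, the $\bm{\zeta}$ terms telescope and $\be^{(+)}=\alpha\be$, hence $\be(k)=\alpha^k\be(0)$; for a stabilizing gain $|\alpha|<1$,
\[
\bq_{12}(k)=-2\bm{\zeta}(r,\nu,k)+\alpha^k\be(0),\qquad\norm{\alpha^k\be(0)}\le|\alpha|^k\norm{\be(0)}\to 0
\]
(equivalently, the $z$-transfer from $\bm{\zeta}$ to $\bq_{12}$ is the static gain $-2$, so $\bq_{12}$ carries exactly the spectral content of $\bm{\zeta}$ modulo initial conditions).

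I would then propagate this decomposition through the Gramian over a window of length $N$ equal to the encirclement period:
\[
\sum_{\kappa=k}^{k+N-1}\bq_{12}(\kappa)\bq_{12}^\top(\kappa)=4\sum_{\kappa=k}^{k+N-1}\bm{\zeta}(\kappa)\bm{\zeta}^\top(\kappa)+\Delta(k),
\]
where $\Delta(k)$ gathers the cross terms $-2\alpha^\kappa\big(\bm{\zeta}(\kappa)\be(0)^\top+\be(0)\bm{\zeta}^\top(\kappa)\big)$ and the quadratic term $\alpha^{2\kappa}\be(0)\be(0)^\top$. Since $\bm{\zeta}$ is bounded ($r$ bounded) and $|\alpha|<1$, summing the geometric tails gives $\norm{\Delta(k)}\le c_1|\alpha|^k$ with $c_1$ independent of $k$. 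Assumption~\ref{Persistently_exciting_shape} sandwiches the leading term between $4\hat{a}_\zeta I$ and $4\check{a}_\zeta I$, so
\[
(4\hat{a}_\zeta-c_1|\alpha|^k)\,I\ \le\ \sum_{\kappa=k}^{k+N-1}\bq_{12}(\kappa)\bq_{12}^\top(\kappa)\ \le\ (4\check{a}_\zeta+c_1)\,I .
\]
Taking $\check{a}_2\triangleq4\check{a}_\zeta+c_1$ gives a finite upper bound for every $k\ge0$, and the lower bound becomes a genuine positive constant once $k\ge k_0\triangleq\lceil\log(4\hat{a}_\zeta/c_1)/\log|\alpha|\rceil$; the finitely many earlier windows are handled by the standard fact that a PE signal fed through an exponentially stable filter remains PE (Theorem~IV.1 of \cite{nguyen2019persistently}; see also \cite{liu2024distance}), yielding $0<\hat{a}_2 I\le\sum_{\kappa=k}^{k+N-1}\bq_{12}(\kappa)\bq_{12}^\top(\kappa)\le\check{a}_2 I<\infty$.

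The step I expect to be the real obstacle is exactly this uniform lower bound: one must rule out that the decaying transient $\alpha^k\be(0)$ destructively cancels, within some window, the excitation supplied by $\bm{\zeta}$ --- a robustness-of-PE-under-exponentially-vanishing-perturbation argument, which is why the lemma is credited to the cited theorems rather than argued from scratch. A final auxiliary check concerns Case~3, where $r_3$ is time-varying: the telescoping $\be^{(+)}=\alpha\be$ still holds verbatim, because the controller \eqref{eq1-12-5}--\eqref{eq1-12-6} already uses $r_3^{(+)}$ in its shift term, so $\bq_{12}(k)=-2\bm{\zeta}(r_3(k),\nu,k)+\alpha^k\be(0)$; one then needs $\{\bm{\zeta}(r_3(k),\nu,k)\}$ to stay PE, which holds since $r_3$ is driven down from $r_2$ but confined to a bounded interval $[\bar r,r_2]$ with $\bar r>0$, so its sinusoidal components are uniformly lower- and upper-bounded (with constants proportional to $\bar r^2$ and $r_2^2$) and the whole argument carries over unchanged.
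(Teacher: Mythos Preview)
Your proposal is correct and follows essentially the same route as the paper: derive the closed-loop recursion $\bq_{12}^{(+)}=\alpha\bq_{12}+2\alpha\bm{\zeta}(k)-2\bm{\zeta}(k+1)$, observe that $\bq_{12}$ is therefore $-2\bm{\zeta}$ plus an exponentially decaying transient, and invoke Theorem~IV.1 of \cite{nguyen2019persistently} and \cite{liu2024distance} to transfer the PE property from $\bm{\zeta}$ to $\bq_{12}$. The paper in fact stops at that citation and gives no further argument, whereas you spell out the telescoping error $\be^{(+)}=\alpha\be$, the explicit Gramian decomposition, and the Case~3 check with $r_3\in[\bar r,r_2]$ --- all of which are sound elaborations of the same idea rather than a different method.
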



\begin{theorem}
The two drones can estimate the position of Target 2 within an error bound if the exponential forgetting factor $\gamma_1$ satisfies $0<\gamma_1\leq\frac{1}{2}$.
\end{theorem}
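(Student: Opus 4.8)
The plan is a recursive--least--squares (RLS) boundedness argument built on the PE property of Lemma~\ref{Persistently_exciting}, treating the impulsive velocity term $\hat{\b{v}}_2$ of \eqref{eq1-5-3} as a bounded disturbance. First I would set up the error dynamics: define the prediction error $\tilde{\bq}_i^{2}(k|k-1)\triangleq\bq_i^2-\hat{\bq}_i^2(k|k-1)$ and the estimation error $\tilde{\bq}_i^2\triangleq\bq_i^2-\hat{\bq}_i^2$. Subtracting the predictor \eqref{eq1-5-2} from the true relative-motion model \eqref{eq1-5} gives $\tilde{\bq}_i^2(k|k-1)=\tilde{\bq}_i^{2(-)}-t(\b{v}_2^{(-)}+\hat{\b{v}}_2^{(-)})=\tilde{\bq}_i^{2(-)}+\bm{\epsilon}^{(-)}$, where the compensation residual $\bm{\epsilon}\triangleq-t(\b{v}_2+\hat{\b{v}}_2)$ satisfies $\norm{\bm{\epsilon}}\le\bar{\epsilon}\le 2t\check{v}_2$ and, by the geometric discussion in Sec.~\ref{sec: tpe}, is of order $\varsigma^{(-)}+\varsigma$ on $\Omega_2\cup\Omega_3$, hence small. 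By \eqref{eq1-3} and \eqref{eq1-5-4} the innovation is $\varpi_{i,p,2}-\hat{\varpi}_{p,2}=\bq_{12}^\top\tilde{\bq}_i^2(k|k-1)$, so \eqref{eq1-5-1} produces the closed-loop recursion $\tilde{\bq}_i^2=(I-K\bq_{12}^\top)(\tilde{\bq}_i^{2(-)}+\bm{\epsilon}^{(-)})$.

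Next I would exploit the RLS structure. The matrix-inversion lemma applied to \eqref{eq1-7}, together with \eqref{eq1-6}, gives the identity $\bm{\eta}_1=\frac{1}{\gamma_1}(I-K\bq_{12}^\top)\bm{\eta}_1^{(-)}$, equivalently $I-K\bq_{12}^\top=\gamma_1\bm{\eta}_1(\bm{\eta}_1^{(-)})^{-1}$, while \eqref{eq1-7} also yields $\bm{\eta}_1\le\frac{1}{\gamma_1}\bm{\eta}_1^{(-)}$. Taking the Lyapunov-type function $V_k\triangleq\tilde{\bq}_i^{2\top}\bm{\eta}_1^{-1}\tilde{\bq}_i^2$, substituting the recursion and these relations, and splitting the cross term with Young's inequality of parameter $\lambda>0$ gives $V_k\le\gamma_1(1+\lambda)V_{k-1}+\gamma_1(1+\lambda^{-1})\bar{\eta}\,\bar{\epsilon}^2$, with $\bar{\eta}\triangleq\sup_k\lambda_{\max}(\bm{\eta}_1^{-1})$. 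This is exactly where the hypothesis $0<\gamma_1\le\tfrac12$ is used: it forces $(1-\gamma_1)/\gamma_1\ge1$, so $\lambda=1$ is admissible and the per-step factor $\gamma_1(1+\lambda)=2\gamma_1\le1$.

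Then I would close the loop with the PE property. Unrolling \eqref{eq1-7} and using that $\norm{\bq_{12}}$ stays bounded under the ADC (the encirclement radii $r_1,r_2,r_3$ are bounded) shows $\bm{\eta}_1^{-1}\le\bar{\eta}I$; Lemma~\ref{Persistently_exciting} then gives $\bm{\eta}_1^{-1}\ge\underline{\eta}I>0$ for $k\ge N$ with $\underline{\eta}\ge\gamma_1^{N-1}\hat{a}_2$. Aggregating the one-step inequality over one PE window and using the accumulated $\bq_{12}\bq_{12}^\top$ excitation upgrades it to an $N$-step contraction $V_{k+N}\le\rho V_k+c\,\bar{\epsilon}^2$ with $\rho<1$ (strictly, even at $\gamma_1=\tfrac12$); iterating yields $\limsup_{k\to\infty}V_k\le\frac{c}{1-\rho}\bar{\epsilon}^2$, and since $\norm{\tilde{\bq}_i^2}^2\le V_k/\underline{\eta}$ this gives $\limsup_{k\to\infty}\norm{\bq_i^2-\hat{\bq}_i^2}^2\le\varrho_1$ with $\varrho_1=\frac{c\,\bar{\epsilon}^2}{\underline{\eta}(1-\rho)}$, i.e. \eqref{eq1-11-1}; moreover $\varrho_1\to0$ as the drones close on Target~2 and $\varsigma\to0$.

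The hard part is this last step: converting the per-step estimate --- which at $\gamma_1=\tfrac12$ is merely non-increasing --- into a genuine uniform $N$-step contraction. This needs a careful accounting of the competition between the forgetting factor, which decays old information like $\gamma_1^N$, and the PE excitation of Lemma~\ref{Persistently_exciting}, which refills $\bm{\eta}_1^{-1}$ from below, to show the net rate $\rho$ is strictly below $1$ uniformly in $k$. A secondary technicality is that $\hat{\b{v}}_2$ is impulsive (active only at the instants $k_m$ via the Dirac factor), so the residual $\bm{\epsilon}$ must be propagated through the window with care to keep the final bound $\varrho_1$ explicit.
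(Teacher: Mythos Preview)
Your proposal is correct and follows essentially the same route as the paper: the same Lyapunov function $V=\be_{p,2}^\top\bm{\eta}_1^{-1}\be_{p,2}$, the same matrix-inversion-lemma identity $I-K\bq_{12}^\top=\gamma_1\bm{\eta}_1(\bm{\eta}_1^{(-)})^{-1}$, and the same PE-based two-sided bound $\hat\beta_1 I\le\bm{\eta}_1^{-1}\le\check\beta_1 I$ (which the paper obtains by citing Johnstone rather than re-deriving). The paper simply uses the Cauchy--Schwarz split $(a+b)^\top M(a+b)\le 2a^\top Ma+2b^\top Mb$ (your Young with $\lambda=1$), arrives at $\triangle V\le(2\gamma_1-1)V+2\gamma_1 t^2\be_{v,2}^\top\bm{\eta}_1^{-1}\be_{v,2}$, and stops at the explicit bound $\varrho_1=\tfrac{2t^2\gamma_1\check\beta_1\varrho_2^2}{(1-2\gamma_1)\hat\beta_1}$; it does not perform your $N$-step aggregation, so its bound degenerates at the endpoint $\gamma_1=\tfrac12$ and it does not separately track the impulsive structure of $\hat{\b v}_2$ --- your extra care there is a refinement rather than a different argument.
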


\begin{proof}
See Appendix A.
\end{proof}

\begin{theorem}
The two drones can achieve the encirclement of all targets within a minimum error bound as the controller gain $\alpha$ satisfies the following condition $-\frac{1}{\sqrt{3}}<\alpha\leq\frac{1}{\sqrt{3}}$.
\end{theorem}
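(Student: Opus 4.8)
I sketch the argument. The plan is to turn the encirclement objectives into input-to-state stability (ISS) of a scalar-gain recursion and then close the loop with the TPE error bound of the preceding theorem.

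First I would dispose of \eqref{eq1-11-0}: inserting $\bar{\bu}_i=-\tfrac1t\Log(\hat R_i)$ into \eqref{eq: dynamic model} gives $R_i^{(+)}=\hat R_i^{-1}R_i$, so with accurate attitude measurements $R_i=I$ after one step, and I set $R_i=I$ thereafter. Next I introduce the encirclement errors $\be_1^j=\bq_1^j+\bm{\zeta}$ and $\be_2^j=\bq_2^j-\bm{\zeta}$, so that \eqref{eq1-11-3}--\eqref{eq1-11-4} read $\limsup_k\norm{\be_i^j}^2\le\varepsilon_{2,j}$ (with the $z$-entry ignored for a surface Target~1). Substituting the Case-1 control \eqref{eq1-12-1}--\eqref{eq1-12-2} into the relative-motion model \eqref{eq1-5} and telescoping the $\bm{\zeta}(\cdot,k)$ and $\bm{\zeta}(\cdot,+)$ terms collapses the dynamics to ${\be_i^1}^{(+)}=\alpha\,\be_i^1+\bm{\omega}_i$ with $\bm{\omega}_i=\b v_1^{(-)}-t\b v_1$ bounded because $\norm{\b v_1}\le\check v_1$. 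Repeating this with the Case-2/Case-3 controls \eqref{eq1-12-3}--\eqref{eq1-12-6}, but writing $\hat{\bq}_i^2=\bq_i^2-\tilde{\bq}_i^2$ for the TPE error $\tilde{\bq}_i^2$, yields ${\be_i^2}^{(+)}=\alpha\,\be_i^2+(1-\alpha)\tilde{\bq}_i^2+\bm{\omega}_i$, where $\bm{\omega}_i$ now absorbs $\hat{\b v}_2^{(-)}-t\b v_2$ (bounded since $\hat{\b v}_2$ in \eqref{eq1-5-3} is saturated) and, in Case~3, the radius residual $\bm{\zeta}(r_3,\nu,k)-\bm{\zeta}(r_3^{(+)},\nu,+)$, which is bounded because $r_3$ shrinks by a fixed increment per step.

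The core step is a Lyapunov estimate with $V_k=\norm{\be_i^j}^2$. Applying $\norm{a+b+c}^2\le 3\norm{a}^2+3\norm{b}^2+3\norm{c}^2$ to the three-term recursion gives $V_{k+1}\le 3\alpha^2V_k+3(1-\alpha)^2\norm{\tilde{\bq}_i^2}^2+3\norm{\bm{\omega}_i}^2$, so the update is a strict contraction driven by a bounded input exactly when $3\alpha^2<1$, i.e. $-\tfrac1{\sqrt3}<\alpha\le\tfrac1{\sqrt3}$, the boundary value being recovered with a slightly sharper split. Iterating then yields $\limsup_k\norm{\be_i^j}^2\le\frac{3}{1-3\alpha^2}\big(\sup_k(1-\alpha)^2\norm{\tilde{\bq}_i^2}^2+\sup_k\norm{\bm{\omega}_i}^2\big)$, which I would identify with $\varepsilon_{2,j}$ (for Target~1 the $\tilde{\bq}_i^2$-term is absent), the bound being smallest for the $\alpha$ that balances the $(1-\alpha)^2$ and $(1-3\alpha^2)^{-1}$ factors. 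The supremum of $\norm{\tilde{\bq}_i^2}^2$ is finite by the previous theorem and \eqref{eq1-11-1}, and the cascade is well posed: the TPE regressor is $\bq_{12}=\be_1^2-\be_2^2-2\bm{\zeta}$, which is PE by Lemma~\ref{Persistently_exciting} (since $\bm{\zeta}$ is PE, Assumption~\ref{Persistently_exciting_shape}) as long as the $\be$-terms stay bounded, so the estimator does not destabilise the encirclement. Finally, since all three cases share the single contraction factor $\alpha$, $V$ is a common Lyapunov function, so the finitely many zone switches before $\hat d^{12}$ settles preserve the ultimate bound; the only extra transient is the one-step jump of size $\le t\check v_2$ injected by the Dirac compensator in \eqref{eq1-5-3} at $k=k_m$, which merely rescales the constant.

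The hard part will be making the cascade rigorous: the controller feeds $\hat{\bq}_i^2$ into the plant while the estimator feeds $\bq_{12}$ (hence the encirclement errors) back into its regressor, so I must show the PE constants $\hat a_2,\check a_2$ of Lemma~\ref{Persistently_exciting} stay bounded away from zero uniformly over nonzero encirclement error -- presumably via a small-gain argument between the covariance recursion \eqref{eq1-7} and the $V$-recursion, with careful bookkeeping of the transition time $\tilde k$ and the compensator jump at each zone change. Everything else reduces to the routine ISS computation sketched above.
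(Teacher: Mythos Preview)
Your proposal is correct and follows essentially the same route as the paper: define the encirclement errors $\be_i^j=\bq_i^j\pm\bm{\zeta}$, substitute the controllers into \eqref{eq1-5} to obtain the recursions ${\be_i^2}^{(+)}=\alpha\be_i^2+(\alpha-1)\be_{p,2}+\text{(bounded disturbance)}$ and ${\be_i^1}^{(+)}=\alpha\be_i^1+\text{(bounded disturbance)}$, take $V=\norm{\be_i^j}^2$, and use the three-term Cauchy--Schwarz split to get $\Delta V\le(3\alpha^2-1)V+\text{const}$, whence the condition $3\alpha^2<1$ and the ultimate bound. Your write-up is in fact more careful than the paper's Appendix~B about the attitude step, the zone switches, and the estimator--controller cascade; the paper simply invokes Lemma~\ref{Persistently_exciting} and the previous theorem without the small-gain bookkeeping you flag, and for Target~1 it uses the two-term split (denominator $1-2\alpha^2$) rather than three.
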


\begin{proof}
See Appendix B.
\end{proof}

\section{SIMULATIONS AND EXPERIMENTS}
\subsection{Numerical simulation}
A simulation example validates that the designed TPE and AS-based ADC can both act well. Here, we consider two drones, one ground cooperative car (Target 1), and one hostile drone (Target 2). For system models, the sampling period $t$ is given as $0.1 s$, and the initial values are as follows:  
\begin{equation*}
\begin{split}
&\bq_{12}(0)=[0,\ -2,\ 0.05]^\top,\bq^{2}_1(0)=[-6,\ -4,\ -1.45]^\top,\\
&\bq_{2}^2(0)=[-6,\ -2,\ -1.5]^\top.
\end{split}
\end{equation*}

\begin{figure}
\centering
  \includegraphics[width=8cm]{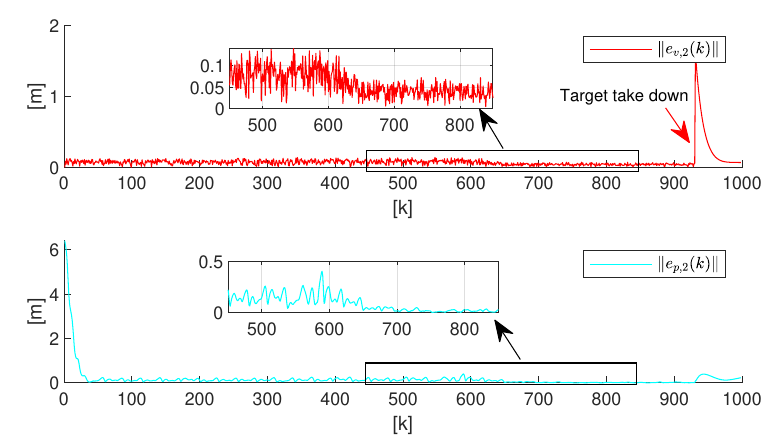}
 \caption{The trajectories of the velocity compensation error $\be_{v,2}(k)$ and the estimation error $\be_{p,2}(k)$ for Target 2.}
  \label{estimation_error}
\end{figure} 

\begin{figure}
\centering
  \includegraphics[width=8cm]{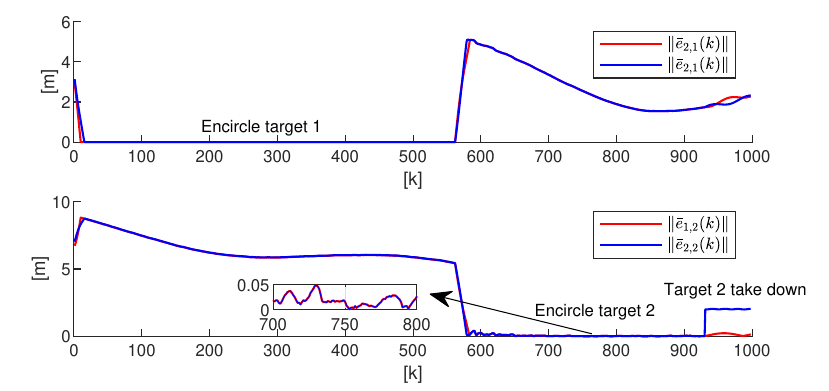}
 \caption{The trajectories of the AS-based encirclement tracking errors $\be_{1}(k)$ and $\be_{2}(k)$ for Target 1 and Target 2.}
  \label{encirclement_error}
\end{figure} 


Based on Theorem 1 and Theorem 2, the following parameters are set, i.e. $\gamma_1=0.45$ and $\alpha=-0.001$.  The vertical motion function can be designed as $g(k)=0.3\cos(\frac{1}{8} k\pi)$ and $\bar{r}=0.1$, $r_1=5.8$, $r_2=3$ and $\iota=0.03$. Additionally, Drone 1 and Drone 2 will project Target 1 onto the excepted height plane, i.e. $h=0.5$.

The simulation results are shown in Fig. \ref{estimation_error} to Fig. \ref{encirclement_error}. The error trajectory in Fig. \ref{estimation_error} shows that the position estimation error for Target 2 is relatively large when encircling Target 1. Once Target 2 enters the warning area, the drones begin encircling it, and the position estimation error for Target 2 decreases to around 0.05 meters, which indicates that AS-based encirclement control contributes to improving the drones' positioning accuracy for targets. Additionally, from the tracking error trajectory in Fig. \ref{encirclement_error}, it can be seen that the two drones can quickly switch from encircling Target 1 to encircling Target 2. For cooperative Target 1, the encirclement tracking error is close to 0, while for non-cooperative Target 2, the encirclement tracking error is within 0.05 meters.

\begin{figure}
\centering
  \includegraphics[width=8cm]{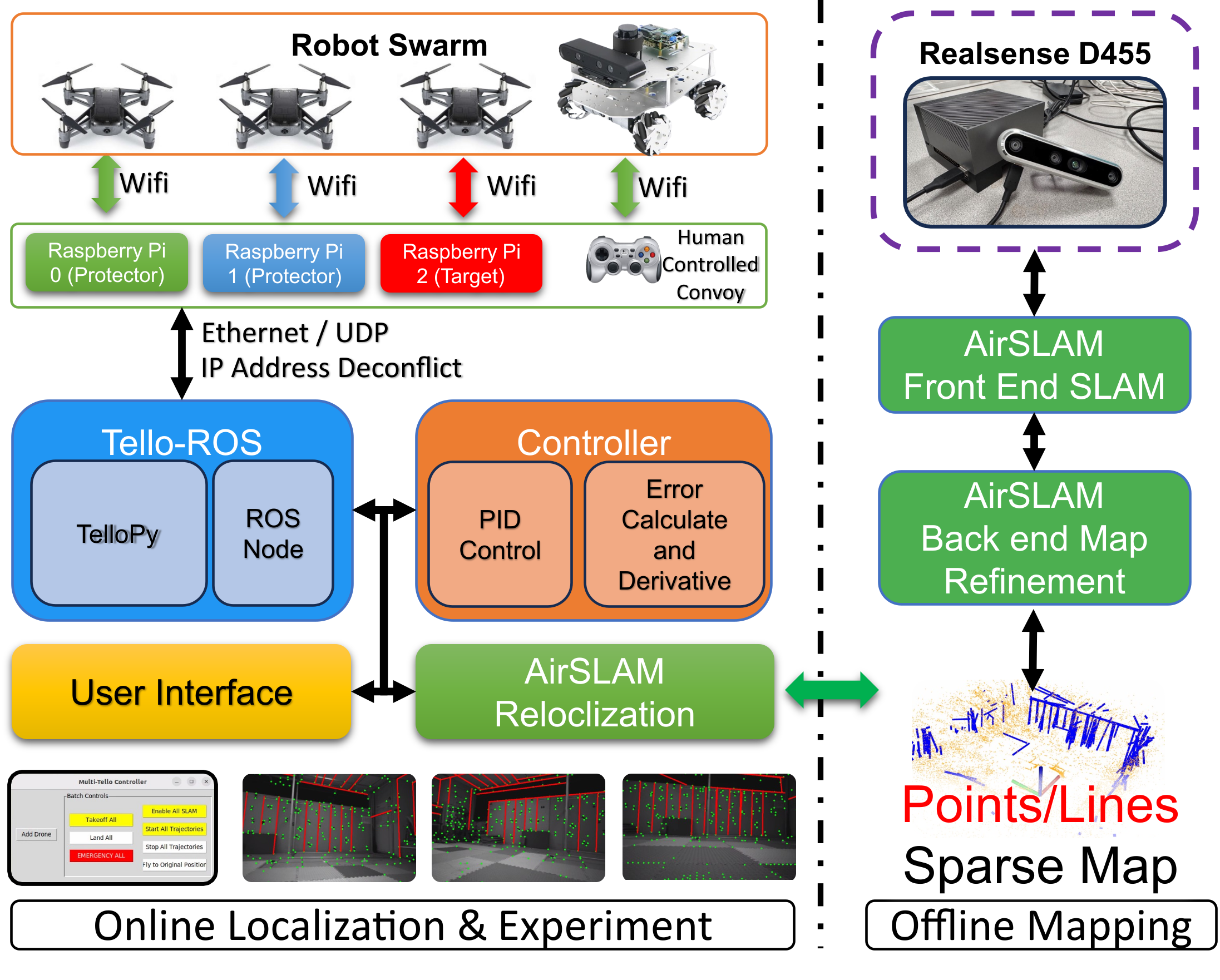}
 \caption{Experiment setup.}
  \label{experiment_setup}
\end{figure} 

\subsection{Real-world UAV-based experiment}
\textbf{Experiment setup:} To validate the proposed solution, a real-world demonstration is conducted, as shown in Fig. \ref{experiment_setup}. The setup of the experiment consists of a surface patrol vehicle that simulates the vulnerable ship, two guard drones, and a hostile drone. All drones used in this study are low-cost Tello drones, chosen to minimize potential damage in the event of a simulated collision. However, these drones lack onboard payload capacity for a microphone array and a PC.
To address this limitation, each Tello drone is wirelessly connected to a Raspberry Pi 4B, which manages the configuration and deconfliction of IP addresses. This setup enables simultaneous control and video streaming for all drone units. Once the IP addresses are assigned, an NUC 11 i7-1165G7 is utilized to process individual visual SLAM instances and issue control commands. Meanwhile, a separate NUC is responsible for managing the hostile drone's behavior, which is autonomously controlled.

All robots operate with multiple instances of AirSLAM nodes \cite{xu2025airslam, xu2024efficient}, processing input images resized to 480p at 30 Hz. The local point clouds generated by each drone are aligned with a pre-built map constructed using offline AirSLAM mapping solutions \cite{xu2025airslam}, as illustrated in Fig. \ref{experiment_setup}. Guardian drones estimate their relative positions by referencing common global point clouds, while pseudo range distances between the guardian UAV, hostile UAV, and surface patrol vehicle are inferred indirectly. Unlike motion capture systems, camera-based range measurements introduce inherent noise, closely resembling real-world measurement uncertainty. 

\begin{figure*}
\centering
  \includegraphics[width=1\linewidth]{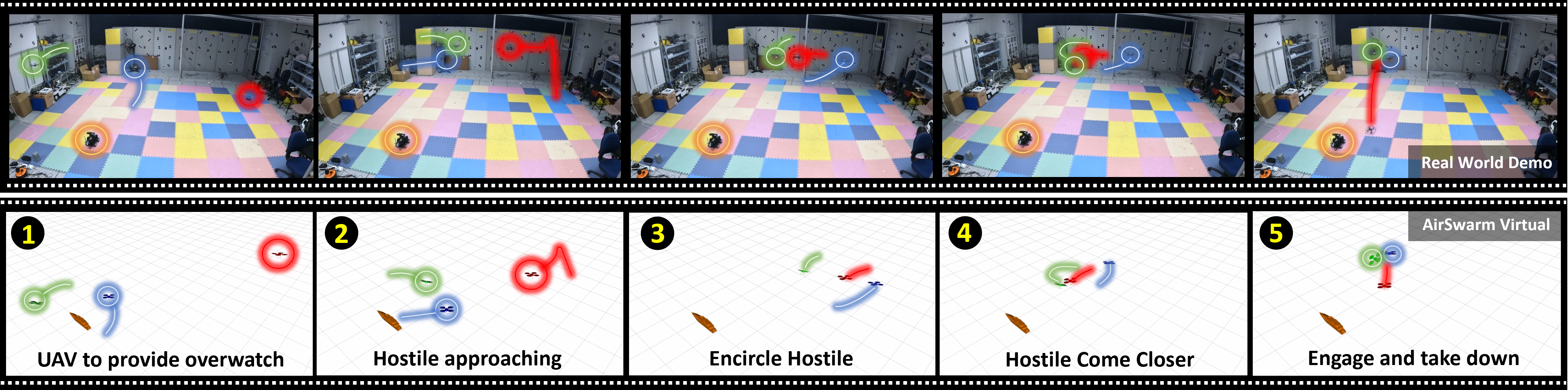}
  \vspace{-20pt}
  \caption{Real-world and virtual UAV swarm tactics for overwatch, approach, encirclement, engagement, and takedown.}
  \label{Real_world_demonstrations}
\end{figure*}

\textbf{Scenario setup:} The experiment simulates a defense scenario in which a surface vehicle, representing a vulnerable ship, traverses a designated area under human control. When a remote-controlled hostile drone attempts a kamikaze-style attack on the convoy, two guardian UAVs respond by encircling the hostile drone based on its relative distance. If the hostile drone approaches too closely, the guardian UAVs execute a direct kinetic collision to neutralize the threat. The overall concept of the real-world demonstration is illustrated in Fig. \ref{Real_world_demonstrations}.

The real-world demonstration lasted approximately 50 seconds, during which a buffering mechanism was implemented to capture the state of each drone at every observation. Uninitialized states were marked as unavailable and the buffered data was published for visualization. Theoretically, the system could collect up to 8,500 state feedback instances. However, due to bandwidth limitations and processing delays, approximately 6,000 samples were recorded over the 50-second duration, resulting in an average effective observation rate of 20 to 25 Hz per drone.

During the first 500 measurements, both guardian drones prioritized reaching their designated orbital positions around Target 1, maintaining minimal deviation while disregarding Target 2, as shown in Fig \ref{Experiment_error}. By the 2,500th measurement, the guardian swarm detected a threat from Target 2 and began maneuvering toward it. This phase was marked by a crossover event, where the two guardian drones repositioned themselves between Targets 1 and 2. From the 3,000th to the 5,000th measurement, the guardian drones maintained an orbit around Target 2. As the hostile entity approached dangerously close, the encirclement radius was gradually reduced, ultimately leading to a controlled collision to neutralize the threat.

More simulation and experimental details can be found in the URL \url{https://youtu.be/5eHW56lPVto}.

\begin{figure}
\centering
  \includegraphics[width=8cm]{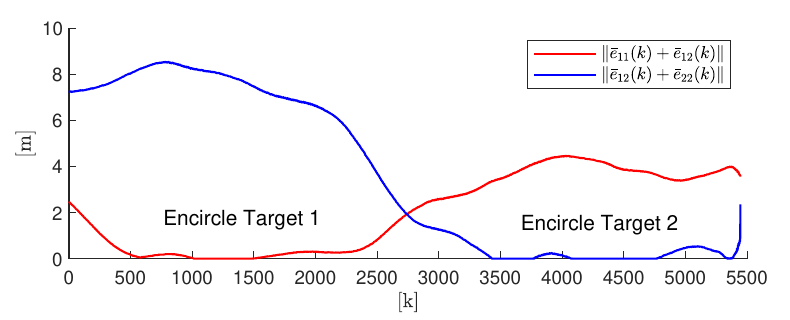}
 \caption{The trajectories of the AS-based encirclement tracking errors $\be_{1}(k)$ and $\be_{2}(k)$ in experiment.}
  \label{Experiment_error}
\end{figure}

 \section{Conclusions}
In this research, we presented an innovative aerial drone-to-drone encirclement and interception strategy designed as a robust vehicle protection solution without any surface vehicle guidance. This approach leveraged AS-based control strategies and simple range-based measurements, primarily derived from hostile UAV noise information. Our results demonstrated the effectiveness of a specially designed perception-aware controller, which enabled drones to autonomously neutralize hostile threats without requiring external surface vehicle guidance. This autonomous capability proved particularly valuable in safeguarding vulnerable ships, enhancing situational awareness, and improving countermeasures against aerial threats.

\section*{Appendix A}
\section*{Proof of Theorem 1}
Denote the relative position estimation error of Target $2$ as $\be_{p,2}\triangleq{\bq}_i^2-\hat{\bq}_i^2$. Then, recalling the relative position model in \eqref{eq1-5} and the position estimator in \eqref{eq1-5-1}, the dynamics of $\be_{p,2}$ can be further obtained as
\begin{equation}\label{eq1-14}
\begin{split}
\be_{p,2}^{(+)}=&A_1(\be_{p,2}+t(\b{v}_2-\hat{\b{v}}_2)),
\end{split}
\end{equation}
where $A_1=I_n-K^{(+)}(\b{q}_{12}^{(+)})^\top$.


The Lyapunov function (LF) candidate can be chosen as
$V_{11}=\be_{p,2}^\top\bm{\eta}_1^{-1}\be_{p,2}$. Furthermore, denoted the velocity compensation error of Target $2$ as $\be_{v,2}\triangleq\b{v}_2-\hat{\b{v}}_2$ and based on the Cauchy-Schwarz inequality, the difference of $V_{11}$ can be deducted as
\begin{equation*}
\begin{split}
\triangle V_{11}=&V_{11}^{(+)}-V_{11}\\
\leq&2\be_{p,2}^\top A_1^\top(\bm{\eta}_1^{(+)})^{-1}A_1\be_{p,2}+2t^2\be_{v,2}^\top A_1^\top(\bm{\eta}_1^{(+)})^{-1}\\
&\times A_1\be_{v,2}-\be_{p,2}^\top\bm{\eta}_1^{-1}\be_{p,2}.
\end{split}
\end{equation*}


According to the matrix inversion lemma, we can obtain $A_1=\gamma_1\bm{\eta}_1^{(+)}\bm{\eta}_1^{-1}$. The differences of $V_{11}$ can be re-obtained as
\begin{equation*}
\begin{split}
\triangle V_{11}=&V_{11}^{(+)}-V_{11}\\
\leq&(2\gamma_1-1)\be_{p,2}^\top\bm{\eta}_1^{-1}\be_{p,2}+2\gamma_1t^2\be_{v,2}^\top\bm{\eta}_1^{-1}\be_{v,2}.
\end{split}
\end{equation*}

Referring back to Lemma \ref{Persistently_exciting} and formation \eqref{eq1-7}, through a straightforward recursive process, we establish the upper and lower bounds of the error covariance matrix $\bm{\eta}_1^{-1}$, i.e. $0 < \hat{\beta}_1I_n \leq \bm{\eta}_1^{-1} \leq \check{\beta}_1I_n$ for all $k\geq N-1$. Detailed derivations can be found in the proof of Lemma 1 in \cite{johnstone1982exponential}. 

Assume that $\norm{\be_{v,2}}\leq \varrho_2$, the convergence of position estimation error $\be_{p,2}$ can be further analyzed. As $k\rightarrow\infty$ and $0 < 2\gamma_1 \leq 1$, we arrive at the result $\norm{\be_{p,2}}^2 \leq \varrho_1$, where $\varrho_1=\frac{2t^2\gamma_1 \check{\beta}_1\varrho_2^2}{(1-2\gamma_1)\hat{\beta}_1}$,  which implies that the target position estimation error is exponentially bounded.

\section*{Appendix B}
\section*{Proof of Theorem 2}
Based on Problem 1, we can define the AS-based tracking encirclement error and the encirclement errors for Drone 1 and Drone 2 as $\bar{\be}_{1,j}\triangleq\b{q}_{1}^{j}+\bm{\zeta}(r,\nu,k)$ and $\bar{\be}_{2,j}\triangleq\b{q}_{2}^{j}-\bm{\zeta}(r,\nu,k)$, respectively. Then, for target 2, the following dynamics of $\bar{\be}_{1,2}$ and $\bar{\be}_{2,2}$ can be further derived based on the relative position model of drones and Target 2 in \eqref{eq1-5}.
\begin{equation*}
\begin{split}
\bar{\be}_{1,2}^{(+)}=&\alpha\bar{\be}_{1,2}+(\alpha-1)\be_{p,2}-(\be_{v,2}^{(-)}+\Delta\b{v}_2),\\
\bar{\be}_{2,2}^{(+)}=&\alpha\bar{\be}_{2,2}+(\alpha-1)\be_{p,2}-(\be_{v,2}^{(-)}+\Delta\b{v}_2).
\end{split}
\end{equation*}

The LFs for the encirclement errors can be chosen as $V_{21}=\norm{\bar{\be}_{1,2}}^2$ and $V_{22}=\norm{\bar{\be}_{2,2}}^2$. Then, the differences of $V_{21}$ and $V_{22}$ can be
obtained as
\begin{equation*}
\begin{split}
\triangle V_{21}
\leq& (3\alpha^2-1)V_{21}+3(\alpha-1)^2\norm{\be_{p,2}}^2\\
&+3\norm{\be_{v,2}+\Delta\b{v}_2}^2,\\
\triangle V_{22}
\leq& (3\alpha^2-1)V_{22}+3(\alpha-1)^2\norm{\be_{p,2}}^2\\
&+3\norm{\be_{v,2}+\Delta\b{v}_2}^2.
\end{split}
\end{equation*}

Based on the condition $-\frac{1}{\sqrt{3}}<\alpha\leq\frac{1}{\sqrt{3}}$, we also can make sure that $\norm{\bar{\be}_{1,2}}^2=\norm{\bar{\be}_{2,2}}^2 \leq \varepsilon_{2,2}$ for $k\rightarrow\infty$, where $\varepsilon_{2,2}=\frac{3(\alpha-1)^2\varrho_1+3(\varrho_2+\check{v}_2)^2}{1-3a^2}$.

Furthermore, the dynamics of the AS-based tracking encirclement error and the encirclement errors for Target 1 can be derived as 
\begin{equation*}
\begin{split}
\bar{\be}_{1,1}^{(+)}=&\alpha\bar{\be}_{1,1}+\Delta\b{v}_1,\\
\bar{\be}_{2,1}^{(+)}=&\alpha\bar{\be}_{2,1}+\Delta\b{v}_1.
\end{split}
\end{equation*}

Similarly, based on the condition $-\frac{1}{\sqrt{3}}<\alpha\leq\frac{1}{\sqrt{3}}$, we have $\|\bar{\be}_{1,1}\|^2 =\|\bar{\be}_{2,1}\|^2 \leq \varepsilon_{2,1}$ with $\varepsilon_{1,1} = \frac{8\check{v}_1^2}{1-2a^2}$ and $\varepsilon_{2,1} = \frac{2\check{v}_1^2}{1-2a^2}$ for $k\rightarrow\infty$. 





\balance
\bibliographystyle{IEEEtran}
\bibliography{IEEEfull}

\end{document}